\documentclass[11pt,a4paper]{article}

\usepackage{amsmath,amssymb,amsthm}
\usepackage{graphicx}
\usepackage{booktabs}
\usepackage{hyperref}
\usepackage{float}
\usepackage{algorithm}
\usepackage{algorithmic}
\usepackage{natbib}
\usepackage{geometry}
\usepackage{subcaption}
\usepackage{multirow}
\graphicspath{{figures/}}

\newtheorem{theorem}{Theorem}
\newtheorem{proposition}{Proposition}

\newtheorem{corollary}{Corollary}
\theoremstyle{definition}
\newtheorem{definition}{Definition}
\newtheorem{remark}{Remark}

\newcommand{\R}{\mathbb{R}}
\newcommand{\E}{\mathbb{E}}

\title{Lévy-Flow Models: Heavy-Tail-Aware Normalizing Flows\\for Financial Risk Management}

\author{
  R. Drissi\\
  \texttt{rdrissi@gmail.com}
}

\date{March 2026}

\begin{document}

\maketitle

\begin{abstract}
Standard normalizing flows use Gaussian base distributions, which systematically underestimate tail risk in financial applications. We introduce \textbf{Lévy-Flows}, a novel class of normalizing flows that replace the Gaussian base with Lévy process-based distributions---specifically Variance Gamma (VG) and Normal-Inverse Gaussian (NIG). These distributions naturally capture heavy tails while preserving exact density evaluation and efficient reparameterized sampling.

For bases with regularly varying (power-law) tails, we prove that the tail index is preserved under asymptotically linear flow transformations. For the semi-heavy-tailed VG and NIG bases used in practice, we show that the identity-tail structure of Neural Spline Flows preserves the base distribution's tail shape exactly outside the spline region. Experiments on S\&P 500 daily returns (2000--2025) and additional assets show that Lévy-Flows substantially improve both density estimation and risk calibration: VG-based flows reduce test negative log-likelihood by 69\% relative to Gaussian flows and achieve exact 95\% VaR calibration (Kupiec $p = 1.00$), while NIG-based flows provide the most accurate Expected Shortfall estimates (1.6\% underestimation vs.\ 10.4\% for Gaussian). Fixed-parameter Student-t flows do not materially improve over Gaussian baselines in density estimation, suggesting that the Lévy parametric structure---not simply heavier tails---drives the gains. Different Lévy bases may be preferable depending on whether the target is density fit, VaR calibration, or tail-loss conservatism.
\end{abstract}

\section{Introduction}

Financial risk management relies critically on accurate modeling of return distributions, particularly in the tails where extreme losses occur \citep{cont2001empirical,mandelbrot1963variation}. Traditional approaches using Gaussian assumptions systematically underestimate tail risk, leading to inadequate capital reserves and unexpected losses during market crises \citep{embrechts2003modelling}. While heavy-tailed distributions like Student-t provide power-law tails, they do not arise from the infinite divisibility and independent increment structure of Lévy processes, which form the theoretical basis of continuous-time financial models \citep{cont2004financial}.

Normalizing flows \citep{rezende2015variational,papamakarios2021normalizing} offer a powerful framework for density estimation through invertible transformations of a simple base distribution. However, the standard choice of Gaussian base distributions limits their ability to capture heavy tails without requiring many transformation layers to reshape the tails---a process that can be numerically unstable and computationally expensive. While some works have explored Student-t bases for robustness \citep{dinh2016density}, these provide only tail heaviness without the richer parametric structure (skewness control, subordination, infinite divisibility) that Lévy process distributions offer.

We propose \textbf{Lévy-Flows}, which replace the Gaussian base with Lévy process-based distributions:
\begin{itemize}
    \item \textbf{Variance Gamma (VG)}: Semi-heavy tails with closed-form density, arising from subordinated Brownian motion
    \item \textbf{Normal-Inverse Gaussian (NIG)}: Flexible skewness and kurtosis control via inverse Gaussian subordination
\end{itemize}

We are not aware of prior work that systematically combines normalizing flows with Lévy process-based base distributions and analyzes tail-shape preservation using extreme value theory.

Our main contributions are:
\begin{enumerate}
    \item \textbf{Theory}: A univariate tail index preservation theorem for regularly varying bases under asymptotically linear flows, plus a structural result showing that identity-tail NSF architectures preserve arbitrary base tail shapes---including the semi-heavy tails of VG and NIG---outside the spline region (Section~\ref{sec:theory})
    \item \textbf{Method}: Efficient implementation of VG and NIG base distributions with reparameterized sampling for end-to-end gradient-based training (Section~\ref{sec:implementation})
    \item \textbf{Experiments}: Comprehensive evaluation on S\&P 500 returns and additional assets showing 69\% lower NLL for VG-based flows, exact 95\% VaR calibration, and the most accurate ES from NIG-based flows, with formal backtesting statistics. We find that density fit and risk calibration favor different Lévy bases (Section~\ref{sec:experiments})
\end{enumerate}

\section{Background}

This section provides the technical foundations for Lévy-Flow models. We first introduce Lévy processes and two key distributions---Variance Gamma and Normal-Inverse Gaussian---that serve as our heavy-tailed base distributions. We then review normalizing flows, the transformation framework that enables flexible density modeling while preserving tractable likelihoods.

\subsection{Lévy Processes and Distributions}

We briefly review Lévy processes to motivate the heavy-tailed base distributions used in Lévy-Flows.
Lévy processes form a natural class of stochastic processes for modeling financial returns, as they arise from limits of sums of independent increments and can capture both continuous price movements and discrete jumps.

A Lévy process $\{X_t\}_{t \geq 0}$ is a stochastic process with stationary, independent increments. The distribution of $X_1$ uniquely characterizes the process through its characteristic function:
\begin{equation}
    \phi(u) = \E[e^{iuX_1}] = \exp(\psi(u))
\end{equation}
where $\psi(u)$ is the Lévy exponent given by the Lévy-Khintchine formula.

\subsubsection{Variance Gamma Distribution}

The Variance Gamma (VG) distribution \citep{madan1998variance} arises from subordinating Brownian motion with a Gamma process. Its density is:
\begin{equation}
    p_{VG}(x; \mu, \sigma, \theta, \nu) = \frac{2}{\sigma\sqrt{2\pi}\nu^{1/\nu}\Gamma(1/\nu)}
    \left(\frac{|x-\mu|}{\sqrt{2\sigma^2/\nu + \theta^2}}\right)^{1/\nu - 1/2}
    K_{1/\nu - 1/2}(\beta|x-\mu|) e^{\gamma(x-\mu)}
\end{equation}
where $K_\nu$ is the modified Bessel function of the second kind, and $\beta, \gamma$ are derived parameters.

The VG can be sampled efficiently via:
\begin{equation}
    X = \mu + \theta G + \sigma\sqrt{G}Z, \quad G \sim \text{Gamma}(1/\nu, 1/\nu), \quad Z \sim N(0,1)
\end{equation}

\subsubsection{Normal-Inverse Gaussian Distribution}

The NIG distribution \citep{barndorff1997normal} has density:
\begin{equation}
    p_{NIG}(x; \alpha, \beta, \mu, \delta) = \frac{\alpha\delta}{\pi}
    \exp(\delta\gamma + \beta(x-\mu)) \frac{K_1(\alpha q)}{q}
\end{equation}
where $q = \sqrt{\delta^2 + (x-\mu)^2}$ and $\gamma = \sqrt{\alpha^2 - \beta^2}$.

\subsection{Normalizing Flows}

We summarize the flow framework to fix notation for the Lévy-Flow construction.
Normalizing flows provide a flexible framework for density estimation by learning invertible transformations between a simple base distribution and a complex target distribution. The key advantage is that they yield exact, tractable likelihoods---unlike variational autoencoders or GANs---making them ideal for risk applications where accurate probability estimates are essential.

A normalizing flow transforms a base distribution $p_Z(z)$ through an invertible mapping $f_\theta: \R^d \to \R^d$ to produce a target distribution. (Our theoretical analysis in Section~\ref{sec:theory} addresses the univariate case $d=1$; extensions to multivariate settings are discussed in Section~6.)
\begin{equation}
    p_X(x) = p_Z(f_\theta^{-1}(x)) \left|\det \frac{\partial f_\theta^{-1}}{\partial x}\right|
\end{equation}

The log-likelihood is:
\begin{equation}
    \log p_X(x) = \log p_Z(z) + \log\left|\det J_{f_\theta^{-1}}(x)\right|
\end{equation}
where $z = f_\theta^{-1}(x)$.

Modern flow architectures include Neural Spline Flows (NSF) \citep{durkan2019neural}, which use monotonic rational quadratic splines as coupling layers for high expressiveness with stable training.

\section{Related Work}

We position Lévy-Flows within prior work on flow-based modeling, Lévy-driven finance, and tail-aware estimation, emphasizing the gap between expressiveness and tail guarantees.

\subsection{Normalizing Flows Beyond Gaussian Bases}

Normalizing flows have become a powerful tool for density estimation and generative modeling \citep{rezende2015variational, papamakarios2021normalizing}. While most architectures assume Gaussian base distributions, several works have explored alternatives. \citet{dinh2016density} briefly considered Student-t bases for improved robustness, though without theoretical analysis of tail preservation. \citet{kobyzev2020normalizing} provide a comprehensive survey noting that heavy-tailed bases remain underexplored. More recently, tail-adaptive flows \citep{jaini2020tails} proposed learning tail behavior, but rely on asymptotic approximations rather than distributions with known tail properties.

\subsection{Tail-Aware and EVT-Inspired Density Estimation}

Extreme value theory provides tools for tail modeling \citep{embrechts2003modelling}, but classical peaks-over-threshold methods require choosing arbitrary thresholds and do not integrate naturally with likelihood-based neural models. Hybrid approaches that splice parametric tails onto flexible cores improve tail fit but complicate density evaluation. Recent tail-adaptive neural models focus on sample quality rather than calibrated likelihoods, leaving a gap for approaches that preserve tail indices with exact densities.

\subsection{Lévy Processes in Finance}

Lévy processes provide a principled framework for modeling asset returns with jumps and heavy tails \citep{cont2004financial}. The Variance Gamma model \citep{madan1990variance, madan1998variance} captures excess kurtosis through Gamma-subordinated Brownian motion and has been widely adopted for option pricing. The Normal-Inverse Gaussian distribution \citep{barndorff1997normal, barndorff1998processes} offers additional flexibility for skewness modeling. The CGMY process \citep{carr2002fine} generalizes these through tempered stable distributions. However, these parametric models have limited flexibility compared to modern neural density estimators.

Our contribution bridges these areas by combining Lévy-based tail behavior with modern flow expressiveness, while providing a formal tail-preservation guarantee and risk-focused evaluation.

\section{Lévy-Flow Models}\label{sec:method}

We now present our main contribution: Lévy-Flow models that combine heavy-tailed Lévy base distributions with expressive normalizing flow transformations. We define the model architecture, establish theoretical guarantees for tail preservation, and describe implementation details for efficient training.

\subsection{Model Definition}

A Lévy-Flow model defines the generative process:
\begin{equation}
    Z \sim p_{\text{Lévy}}(\cdot; \phi), \quad X = f_\theta(Z)
\end{equation}
where $p_{\text{Lévy}}$ is a Lévy distribution (VG or NIG in this work) with parameters $\phi$, and $f_\theta$ is a normalizing flow transformation. The framework extends naturally to other Lévy bases such as CGMY \citep{carr2002fine}.

The log-likelihood is computed as:
\begin{equation}
    \log p_X(x; \theta, \phi) = \log p_{\text{Lévy}}(f_\theta^{-1}(x); \phi) + \log\left|\det J_{f_\theta^{-1}}(x)\right|
\end{equation}

\subsection{Tail-Shape Preservation}\label{sec:theory}

A key theoretical property of Lévy-Flows is the preservation of tail behavior through the flow transformation. We establish two complementary results: (1) for bases with regularly varying (power-law) tails, the tail \emph{index} is preserved (Theorem~\ref{thm:tail}); (2) for \emph{any} base distribution, the identity-tail structure of NSF preserves the base tail \emph{shape} exactly outside the spline region (Proposition~\ref{prop:identity_tail}). Both results are stated for the univariate case ($d = 1$). We use the framework of regular variation from extreme value theory \citep{embrechts2003modelling,bingham1989regular}.

\begin{definition}[Regular Variation]
A measurable function $L: (0, \infty) \to (0, \infty)$ is \emph{slowly varying} at infinity if for all $\lambda > 0$:
\begin{equation}
    \lim_{x \to \infty} \frac{L(\lambda x)}{L(x)} = 1
\end{equation}
A random variable $Z$ has a \emph{regularly varying} tail with index $\alpha > 0$ if:
\begin{equation}
    P(Z > x) \sim x^{-\alpha} L(x) \quad \text{as } x \to \infty
\end{equation}
where $L$ is slowly varying.
\end{definition}

\begin{theorem}[Tail Index Preservation under Asymptotically Linear Flows]\label{thm:tail}
Let $Z$ be a real-valued random variable with regularly varying tail:
\begin{equation}
    P(Z > x) \sim x^{-\alpha} L(x)
\end{equation}
where $L$ is slowly varying. Let $f: \R \to \R$ satisfy:
\begin{enumerate}
    \item $f$ is strictly increasing and continuously differentiable for large $x$,
    \item $f$ is bi-Lipschitz on $[x_0, \infty)$ for some $x_0 > 0$,
    \item $f$ is asymptotically linear: $\lim_{x \to \infty} f(x)/x = c > 0$.
\end{enumerate}
Then $X = f(Z)$ is also regularly varying with index $\alpha$:
\begin{equation}
    P(f(Z) > x) \sim x^{-\alpha} \tilde{L}(x)
\end{equation}
where $\tilde{L}(x) = c^\alpha L(x)$ is slowly varying.
\end{theorem}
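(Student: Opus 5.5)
The plan is to reduce the tail of $X=f(Z)$ to the tail of $Z$ evaluated at $f^{-1}(x)$, and then to control $f^{-1}(x)$ against $x/c$ through a monotone sandwich. Write $\bar F(x)=P(Z>x)$, so that $\bar F(x)\sim x^{-\alpha}L(x)$.

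\emph{Step 1: invert the flow on the tail.} Since $f(x)/x\to c>0$, we have $f(x)\to\infty$. By hypothesis~1, $f$ is continuous and strictly increasing on some $[x_1,\infty)$. So $f$ maps $[x_1,\infty)$ bijectively onto $[f(x_1),\infty)$, and $f^{-1}$ is well defined and increasing there. I will also use that $f$ is a normalizing-flow map, hence a continuous increasing bijection of $\R$; this is implicit in the setting, and I will state it explicitly. It gives $M:=\sup_{z\le x_1}f(z)=f(x_1)<\infty$. Hence for every $x>M$,
\[
\{f(Z)>x\}=\{Z>f^{-1}(x)\},\qquad\text{so}\qquad P(f(Z)>x)=\bar F\bigl(f^{-1}(x)\bigr).
\]
The bi-Lipschitz hypothesis is not essential here. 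At most it guarantees that $f^{-1}$ is Lipschitz, which I can mention but do not need.

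\emph{Step 2: asymptotics of $f^{-1}$.} Asymptotic linearity gives $f^{-1}(x)\cdot c/x\to 1$. To see this, set $y=f^{-1}(x)\to\infty$. Then $x/y=f(y)/y\to c$. Consequently, for each $\varepsilon\in(0,1)$, there is an $x_\varepsilon$ such that
\[
(1-\varepsilon)\frac{x}{c}\le f^{-1}(x)\le(1+\varepsilon)\frac{x}{c}\qquad\text{for all } x\ge x_\varepsilon .
\]

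\emph{Step 3: sandwich and pass to the limit.} This is the step I expect to be the main subtlety. The tempting argument is to write $L(f^{-1}(x))/L(x/c)\to1$ directly. But slow variation only gives pointwise convergence of $L(\lambda x)/L(x)$ for fixed $\lambda$, and here the ratio $f^{-1}(x)/(x/c)$ varies with $x$. One fix is the Uniform Convergence Theorem for slowly varying functions \citep{bingham1989regular}. I will instead avoid it by exploiting the monotonicity of $\bar F$. For $x\ge x_\varepsilon$, Step 2 gives
\[
\bar F\Bigl((1+\varepsilon)\tfrac{x}{c}\Bigr)\le \bar F\bigl(f^{-1}(x)\bigr)\le \bar F\Bigl((1-\varepsilon)\tfrac{x}{c}\Bigr).
\]
Divide by $x^{-\alpha}L(x)$. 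By regular variation with the fixed constants $\lambda_\pm=(1\pm\varepsilon)/c$, we have $\bar F(\lambda x)/(x^{-\alpha}L(x))\to\lambda^{-\alpha}$. Therefore
\[
c^\alpha(1+\varepsilon)^{-\alpha}\le\liminf_{x\to\infty}\frac{P(f(Z)>x)}{x^{-\alpha}L(x)}\le\limsup_{x\to\infty}\frac{P(f(Z)>x)}{x^{-\alpha}L(x)}\le c^\alpha(1-\varepsilon)^{-\alpha}.
\]
Letting $\varepsilon\downarrow0$ yields $P(f(Z)>x)\sim c^\alpha x^{-\alpha}L(x)$.

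\emph{Step 4: conclude.} Set $\tilde L=c^\alpha L$. This is trivially slowly varying, being a positive constant times a slowly varying function. So $X=f(Z)$ is regularly varying with the same index $\alpha$, and the tail constant is rescaled by exactly $c^\alpha$, as claimed.
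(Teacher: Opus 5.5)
Your proof is correct, and it departs from the paper's at exactly the step you flag.

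\textbf{The paper's route.} The paper runs a single chain of equivalences,
$P(Z>f^{-1}(x))\sim (f^{-1}(x))^{-\alpha}L(f^{-1}(x))\sim (x/c)^{-\alpha}L(x/c)$,
and then invokes slow variation only for $L(x/c)\sim L(x)$. The middle equivalence tacitly requires $L(f^{-1}(x))/L(x/c)\to 1$, where the ratio $f^{-1}(x)/(x/c)$ depends on $x$. That is the Uniform Convergence Theorem, which is available because $L$ is assumed measurable, but the paper never cites it.

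\textbf{Your route.} The sandwich replaces this with two elementary facts: monotonicity of $\bar F$, and regular variation at the two fixed scales $(1\pm\varepsilon)/c$. The argument is therefore self-contained. The trick works because the function evaluated at the moving argument is the monotone survival function $\bar F$, not $L$, which need not be monotone.

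\textbf{What each buys.} Once the Uniform Convergence Theorem is cited, the paper's version is shorter and shows the $c^{\alpha}$ factor in one line. Your version costs a few extra lines and avoids the outside theorem.

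\textbf{The global step.} You also make explicit something the paper uses tacitly via ``since $f$ is strictly increasing'': $f$ is bounded above on $(-\infty,x_1]$, so no mass of $Z$ from the left can land in the right tail of $f(Z)$. Global monotonicity of $f$, as in hypothesis~1, already gives this. You therefore do not need continuity or surjectivity of $f$ on all of $\R$.

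Neither argument uses the bi-Lipschitz or differentiability hypotheses.
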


\begin{proof}
Since $f$ is strictly increasing:
\begin{equation}
    P(f(Z) > x) = P(Z > f^{-1}(x))
\end{equation}

From asymptotic linearity of $f$, we have $f^{-1}(x) \sim x/c$ as $x \to \infty$. Therefore:
\begin{align}
    P(f(Z) > x) &= P(Z > f^{-1}(x)) \\
    &\sim (f^{-1}(x))^{-\alpha} L(f^{-1}(x)) \\
    &\sim (x/c)^{-\alpha} L(x/c) \\
    &= c^\alpha x^{-\alpha} L(x/c)
\end{align}

By the defining property of slow variation, $L(x/c) \sim L(x)$ as $x \to \infty$. Thus:
\begin{equation}
    P(f(Z) > x) \sim c^\alpha x^{-\alpha} L(x) = x^{-\alpha} \tilde{L}(x)
\end{equation}
where $\tilde{L}(x) = c^\alpha L(x)$ is slowly varying, completing the proof.
\end{proof}

\begin{corollary}[Application to Neural Spline Flows]
Neural Spline Flows with bounded spline regions $[-B, B]$ and identity tails satisfy the conditions of Theorem~\ref{thm:tail} with $c = 1$, since outside the spline region the transformation is the identity: $f(x) = x$ for $|x| > B$.
\end{corollary}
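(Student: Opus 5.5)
The corollary follows by checking the three hypotheses of Theorem~\ref{thm:tail} for the map $f$ realized by an NSF with identity tails, then invoking the theorem. I will treat a single rational-quadratic spline layer first and then compose. On $[-B,B]$ the spline is a strictly increasing, continuously differentiable bijection of $[-B,B]$ onto itself: the knots satisfy $f(\pm B)=\pm B$, and the derivatives at the interior knots are positive. Outside $[-B,B]$ the map is $f(x)=x$. Since the theorem only asks for regularity \emph{for large $x$}, I can take $x_0>B$. On $[x_0,\infty)$ we then have $f(x)=x$, which is trivially $C^1$, strictly increasing and bi-Lipschitz with constant $1$. Asymptotic linearity also holds with $c=1$, because $f(x)/x=1$ for all $x>B$.

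Global strict monotonicity of $f$ on $\R$ is needed in the theorem's proof to write $P(f(Z)>x)=P(Z>f^{-1}(x))$. It follows because $f$ is increasing on each of the three pieces and continuous at $\pm B$, thanks to the boundary conditions $f(\pm B)=\pm B$.

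For a flow that composes $K$ such layers, possibly with affine or permutation steps, I will argue as follows. A composition of strictly increasing maps is strictly increasing. For $x$ beyond the largest threshold, each layer acts as the identity, so the composition is eventually the identity. In $d=1$, any elementwise affine layer $x\mapsto ax+b$ with $a>0$ instead gives eventual linearity with slope $c=\prod a_k$. The bare statement claims $c=1$, so I will restrict to identity-tailed layers, and I will note that adding affine layers simply changes $c$ while keeping $\alpha$.

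I expect the main obstacle to be interpretive rather than technical. The corollary speaks of ``coupling layers'', but in $d=1$ a coupling layer has no conditioning partition. So I would fix the univariate reading: $f$ is a composition of monotone rational-quadratic splines whose parameters do not depend on $x$. Under that reading the hypotheses are immediate. In fact the conclusion is stronger, since $P(f(Z)>x)=P(Z>x)$ exactly for large $x$, giving $\tilde L=L$, consistent with $c^\alpha=1$.
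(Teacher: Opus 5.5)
Your proposal is correct and takes the same route as the paper. The paper's entire justification is the clause in the statement: identity tails give $f(x)=x$ for $|x|>B$, so the hypotheses of Theorem~\ref{thm:tail} hold trivially on $[x_0,\infty)$ with $x_0>B$ and $c=1$. Your additional checks fill in details the paper leaves implicit: global strict monotonicity via the knot conditions $f(\pm B)=\pm B$, which the theorem's first step $P(f(Z)>x)=P(Z>f^{-1}(x))$ silently needs, and the persistence of identity tails under composition of layers. Your closing remark that $P(f(Z)>x)=P(Z>x)$ holds exactly for $x>B$ is consistent with Proposition~\ref{prop:identity_tail}.
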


Theorem~\ref{thm:tail} applies to bases with regularly varying (power-law) tails, such as Student-t or stable distributions. For the VG and NIG distributions used in our experiments, the tails are \emph{semi-heavy}---they decay as $|x|^{p} e^{-c|x|}$ for constants $p$ and $c > 0$---and are therefore not regularly varying in the strict sense. The following proposition provides the relevant guarantee for these bases:

\begin{proposition}[Identity-Tail Preservation for Arbitrary Bases]\label{prop:identity_tail}
Let $p_Z$ be any base distribution with continuous density, and let $f: \R \to \R$ satisfy $f(x) = x$ for all $|x| > B$ (identity tails). Then $p_X(x) = p_Z(x)$ for all $|x| > B$. In particular, the tail decay rate of $p_Z$---whether power-law, semi-heavy, or exponential---is preserved exactly outside $[-B, B]$.
\end{proposition}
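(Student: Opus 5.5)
The plan is to argue at the level of probability measures rather than through the Jacobian formula. This avoids needing differentiability of $f$ at the knots $\pm B$, and uses only that $f$ is a bijection of $\R$, as any normalizing-flow transformation is. The key claim is that for every Borel set $A \subset \R\setminus[-B,B]$ we have $f^{-1}(A) = A$. Granting this, $P(X \in A) = P(Z \in f^{-1}(A)) = P(Z \in A)$. So the laws of $X$ and $Z$ coincide on the open set $U = \R\setminus[-B,B]$, and their densities agree there almost everywhere.

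The main step, and the only genuine obstacle, is to exclude points of $[-B,B]$ being mapped by $f$ into $U$. Points of $U$ are already accounted for, since $f(x)=x$ for $x\in U$. Bijectivity handles this:
\begin{itemize}
\item Since $f$ restricted to $U$ is the identity, $f(U) = U$.
\item Since $f$ is injective, no $y \in [-B,B]$ can have $f(y) \in U$: if $f(y) = u \in U$, then also $f(u) = u$, forcing $y = u \in U$, a contradiction. Hence $f([-B,B]) \subseteq [-B,B]$.
\item Surjectivity of $f$ onto $\R$ then gives $f([-B,B]) = [-B,B]$.
\end{itemize}
Consequently $f^{-1}(A) = A$ for $A \subseteq U$. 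This is where the invertibility of the flow is essential: a non-injective $f$ equal to the identity outside $[-B,B]$ could push mass from the spline region into the tails and break the conclusion.

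To pass from almost-everywhere to pointwise equality, I would fix $x$ with $|x|>B$ and apply the measure identity to small intervals $A = (x-h, x+h) \subset U$. Dividing by $2h$ and letting $h\to 0$ gives the result, by continuity of $p_Z$ and the corresponding continuity of $p_X$ on $U$, which is the version of the density the flow formula produces there. Alternatively, on $U$ the map $f^{-1}$ is the identity, so the change-of-variables formula of the Background section applies directly with Jacobian $1$ and gives $p_X(x)=p_Z(f^{-1}(x))\cdot 1 = p_Z(x)$.

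The final sentence of the proposition, preservation of the decay rate, is then immediate. Any asymptotic statement about $p_Z(x)$ or $P(Z>x)$ as $|x|\to\infty$ depends only on values with $|x|>B$. Indeed $P(X>x) = P(Z>x)$ for all $x>B$ by the measure identity with $A=(x,\infty)$. So power-law, semi-heavy $|x|^p e^{-c|x|}$, or exponential decay carries over verbatim to $X$.
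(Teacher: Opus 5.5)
Your argument is correct, but it takes a different route from the paper. The paper's proof is a one-line application of the change-of-variables formula: for $|x|>B$ it asserts $f^{-1}(x)=x$ and $\det J_{f^{-1}}(x)=1$, so $p_X(x)=p_Z(x)$. You instead work with the pushforward law. You show that every Borel $A\subseteq\R\setminus[-B,B]$ is its own preimage, and only then pass to densities, either by difference quotients or by the same Jacobian computation the paper uses.

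Your route makes explicit the role of invertibility, which the paper uses silently when it writes $f^{-1}(x)=x$: injectivity is what prevents mass in the spline region from being sent into the tails. Your route also needs neither a global diffeomorphism nor a density for $X$ inside $[-B,B]$. It gives $P(X>x)=P(Z>x)$ for $x>B$ directly, which is the cleanest form of the ``in particular'' clause for survival-function tails. The paper's route buys brevity in the standard setting where $f$ is a $C^1$ diffeomorphism, as NSF transforms are.

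Two minor points. First, the step you flag as the main obstacle follows from injectivity alone ($f(y)=u=f(u)$ forces $y=u$). So surjectivity and $f([-B,B])=[-B,B]$ are not needed for $f^{-1}(A)=A$. Second, in the difference-quotient step you need not presuppose continuity of $p_X$. The identity $P(X\in(x-h,x+h))/(2h)=P(Z\in(x-h,x+h))/(2h)\to p_Z(x)$ holds by continuity of $p_Z$ alone, and it identifies $p_Z$ as the continuous version of $p_X$ on $\{|x|>B\}$.
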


\begin{proof}
For $|x| > B$, we have $f^{-1}(x) = x$ and $\det J_{f^{-1}}(x) = 1$, so $p_X(x) = p_Z(f^{-1}(x)) \cdot |\det J_{f^{-1}}(x)| = p_Z(x)$.
\end{proof}

This result is elementary but important: it means that choosing a VG or NIG base with NSF guarantees that the model's tail behavior beyond $B$ standard deviations matches the base distribution exactly, regardless of the flow parameters $\theta$. Combined with Theorem~\ref{thm:tail}, the picture is: for power-law bases the tail index is preserved even under non-identity asymptotically linear flows, while for semi-heavy bases the identity-tail structure of NSF provides the preservation mechanism.

\begin{remark}[Sensitivity to Tail Bound $B$]
The tail bound $B$ in the NSF architecture determines the region where the flow can reshape the distribution. Outside $[-B, B]$, the transformation is the identity, so the base distribution's tail behavior is preserved exactly. If $B$ is too small, the flow cannot model the body of the distribution adequately; if too large, the identity-tail region shrinks and tail preservation becomes vacuous. We use $B = 5.0$ throughout (in standardized units), which corresponds to approximately $5\sigma$ events. At this threshold, the standardized data effectively never exceeds the bound (the most extreme S\&P 500 daily return in our sample is approximately $4.5\sigma$), so the flow has full flexibility over the observed data range while preserving tail behavior for extrapolation beyond the sample.
\end{remark}

\subsection{Implementation Details}\label{sec:implementation}

\subsubsection{Reparameterized Sampling}

For gradient-based optimization, we require differentiable sampling from the base distribution. Both VG and NIG admit reparameterized sampling:

\textbf{Variance Gamma}: Sample $G \sim \text{Gamma}(1/\nu, 1/\nu)$ and $Z \sim N(0,1)$, then $X = \mu + \theta G + \sigma\sqrt{G}Z$. The Gamma samples are reparameterized using the shape augmentation trick.

\textbf{NIG}: Sample $Y \sim \text{InverseGaussian}(\delta, \gamma)$ and $Z \sim N(0,1)$, then $X = \mu + \beta Y + \sqrt{Y}Z$. Inverse Gaussian sampling uses the transformation method with reparameterization.

This enables end-to-end training of both flow parameters $\theta$ and (optionally) base distribution parameters $\phi$ via backpropagation.

\subsubsection{Numerical Stability}

Computing $\log p_{VG}$ requires evaluating the modified Bessel function $K_\nu(z)$. For large $z$, we use the asymptotic expansion:
\begin{equation}
    \log K_\nu(z) \approx \frac{1}{2}\log\frac{\pi}{2z} - z, \quad z \gg 1
\end{equation}

For moderate $z$, we use the exponentially-scaled Bessel function $K_\nu^{(e)}(z) = e^z K_\nu(z)$ to avoid overflow.

\subsubsection{Data Standardization}

For small-scale financial data (typical daily returns have std $\approx 0.01$), we standardize inputs:
\begin{equation}
    \tilde{x} = \frac{x - \bar{x}}{\hat{\sigma}_x}
\end{equation}
The base distribution parameters are then specified for unit-scale data. The log-likelihood is adjusted by the Jacobian: $\log p(x) = \log p(\tilde{x}) - \log \hat{\sigma}_x$. All reported NLL values are computed on the same temporally split standardized returns, using standardization statistics fit on training data only, with the same Jacobian correction applied consistently across every model.

\section{Experiments}\label{sec:experiments}

We evaluate Lévy-Flows on S\&P 500 daily returns, a standard benchmark for financial risk modeling. Our experiments address three key questions: (1) Do Lévy-Flows improve density estimation compared to Gaussian baselines? (2) Do they produce better-calibrated risk measures (VaR, ES) under formal backtesting? (3) How do they perform during market crises when accurate tail modeling matters most?

\subsection{Experimental Setup}

\subsubsection{Model Architectures}

We first define the model families and ablations used to isolate the impact of the Lévy base distribution.
We compare seven model configurations to isolate the contribution of both the Lévy base and the flow transformation:
\begin{itemize}
    \item \textbf{VG-only}: Variance Gamma distribution without flow (ablation baseline)
    \item \textbf{NIG-only}: Normal-Inverse Gaussian without flow (ablation baseline)
    \item \textbf{Gaussian-Flow}: Standard NSF with Gaussian base
    \item \textbf{Student-t Flow}: NSF with Student-t base ($\nu = 3$, power-law tails)
    \item \textbf{Lévy-Flow (VG)}: NSF with Variance Gamma base
    \item \textbf{Lévy-Flow (NIG)}: NSF with Normal-Inverse Gaussian base
    \item \textbf{Light-Tail Flow}: NSF with narrow Gaussian ($\sigma=0.5$)
\end{itemize}

The ``VG-only'' and ``NIG-only'' baselines answer the question: \emph{how much improvement comes from the Lévy base versus the flow transformation?} The Student-t Flow baseline provides a non-Lévy heavy-tailed comparison; differences in performance may reflect tail heaviness, skewness modeling, or other distributional features, so this comparison is informative but does not cleanly isolate any single factor.

All flow-based models use 4 Neural Spline Flow layers with 8 rational quadratic spline bins, hidden dimensions [64, 64], and tail bound of 5.0. Training uses Adam optimizer with learning rate $10^{-3}$ for 500 epochs with early stopping (patience 50, based on validation NLL).

\subsubsection{Base Distribution Parameters}

We fix base parameters to keep comparisons attributable to the flow rather than the base fit.
For VG: $\mu=0$, $\sigma=1$, $\theta=-0.2$ (negative skew to match equity returns), $\nu=0.8$ (moderate tail heaviness). For NIG: $\alpha=1.5$, $\beta=-0.1$, $\mu=0$, $\delta=1.0$. For Student-t: $\nu=3$ (matching the empirical tail index $\alpha \approx 2.5$--$3$), $\mu=0$, $\sigma=1$.

\textbf{Design choice}: We fix base distribution parameters $\phi$ and train only the flow parameters $\theta$. This isolates the effect of changing the base distribution family and provides a fair comparison across models. Joint optimization of $(\theta, \phi)$ is straightforward in principle (our implementation supports it via reparameterized gradients) but risks confounding the base-family comparison and is left for future work.

\subsection{S\&P 500 Daily Returns}

This subsection summarizes the dataset and provides descriptive statistics that motivate heavy-tail modeling.
We evaluate on S\&P 500 daily log-returns from 2000--2025 (6,514 observations). All data splits are \emph{temporal}: we use the first 80\% of observations chronologically for training and the final 20\% for testing, ensuring no lookahead bias. For density estimation experiments with a validation set, we use a 70/15/15 temporal split.

\subsubsection{Data Characteristics}
\noindent We report key moments and tail diagnostics to contextualize model performance.
\begin{itemize}
    \item Mean: 0.024\% (5.94\% annualized)
    \item Std: 1.22\% (19.42\% annualized)
    \item Skewness: -0.35 (negative)
    \item Excess Kurtosis: 10.59 (heavy tails)
    \item Jarque-Bera test: $p < 10^{-10}$ (strong rejection of normality)
    \item Hill estimator (tail index): 2.52
\end{itemize}

Figure~\ref{fig:hill} shows the Hill estimator plot. The estimated tail index $\alpha \approx 2.5$ is consistent with prior findings for equity returns \citep{cont2001empirical,fama1965behavior} and indicates substantially heavier tails than the Gaussian distribution. We note that the Hill estimator assumes a power-law tail, which the VG and NIG distributions do not possess in the strict asymptotic sense; rather, the estimate confirms that a Gaussian base is inadequate and motivates the use of distributions with heavier-than-Gaussian tails. The connection to our tail-preservation analysis (Theorem~\ref{thm:tail}) is direct for power-law bases such as Student-t; for VG and NIG, the relevant guarantee is Proposition~\ref{prop:identity_tail}.

\begin{figure}[h]
\centering
\includegraphics[width=0.6\textwidth]{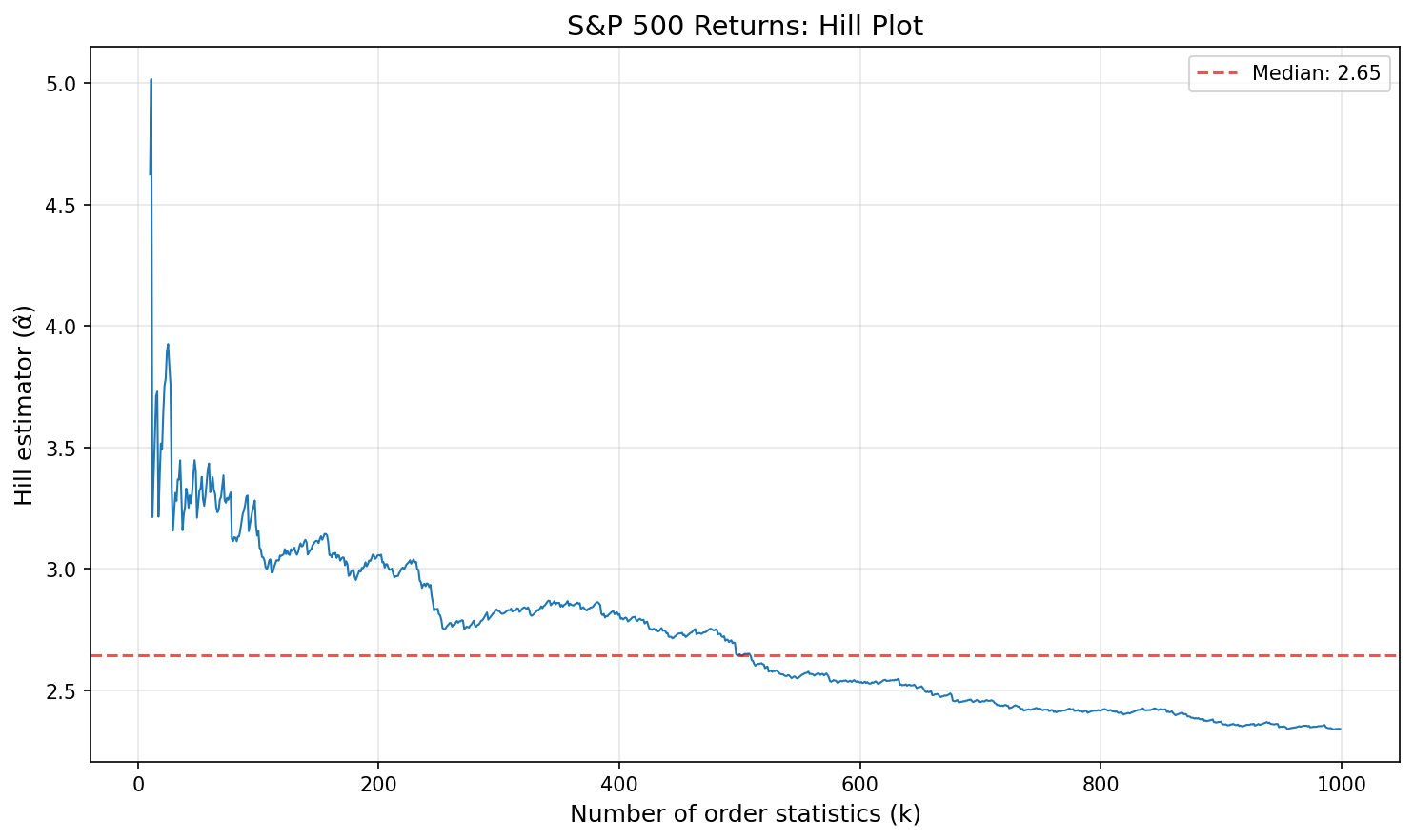}
\caption{Hill estimator plot for S\&P 500 returns. The estimated exponent of approximately 2.5 indicates substantially heavier tails than the Gaussian distribution. The Hill estimator assumes power-law decay; this estimate should be read as evidence that heavy-tailed bases are warranted, not as a claim about the asymptotic tail form of the fitted model.}
\label{fig:hill}
\end{figure}

\subsubsection{Density Estimation}

We compare learned densities against the empirical distribution to assess overall fit.
Figure~\ref{fig:density} compares the fitted densities from each model against the empirical distribution. The Lévy-Flow models capture both the peak and tails more accurately than the Gaussian baseline.

\begin{figure}[h]
\centering
\includegraphics[width=0.8\textwidth]{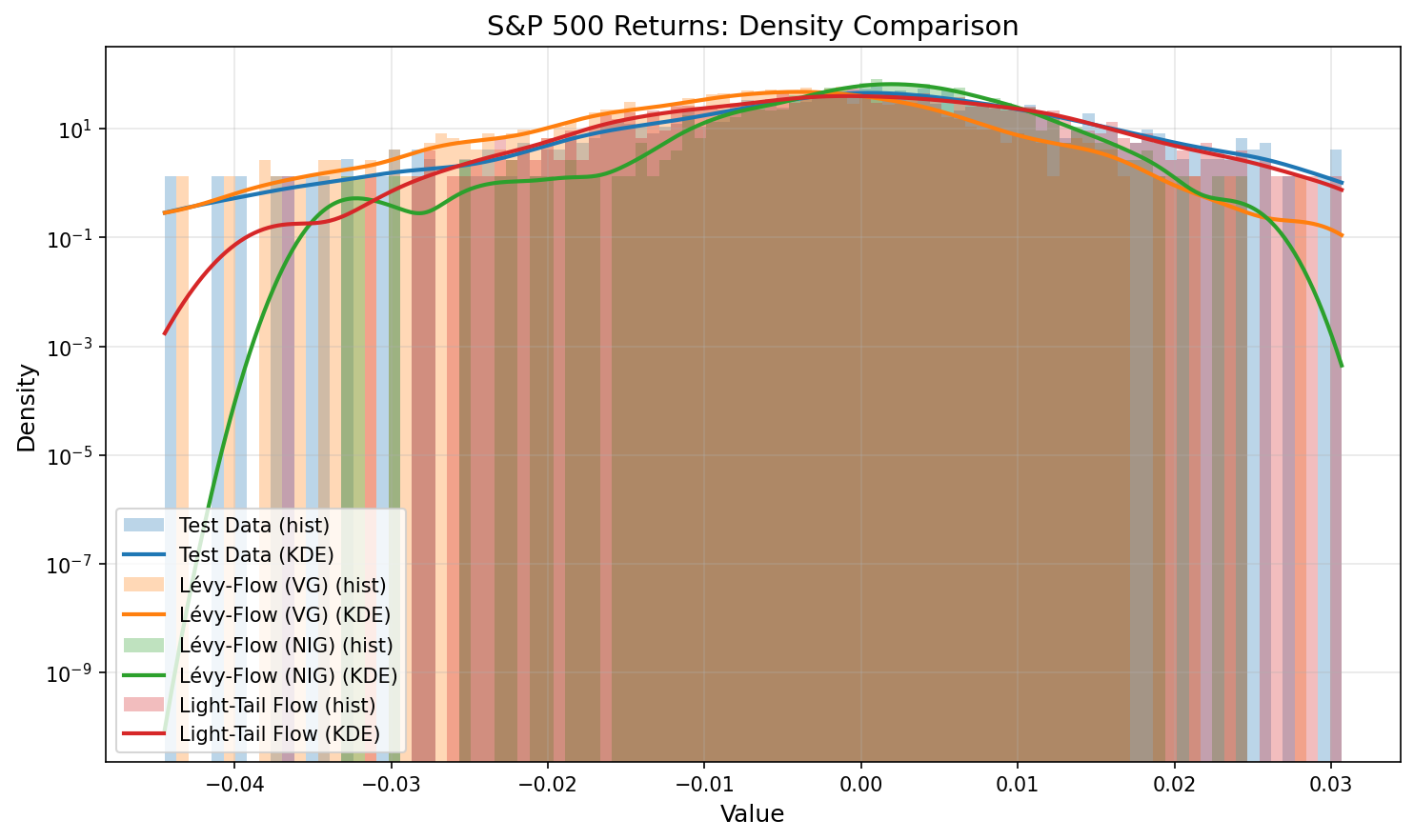}
\caption{Density comparison on S\&P 500 returns. Lévy-Flows (VG, NIG) capture the peaked center and heavy tails better than Gaussian-based flows.}
\label{fig:density}
\end{figure}

\subsubsection{Tail Behavior}

We examine tail mass on a log scale to highlight differences that matter for risk metrics.
Figure~\ref{fig:tails} shows the tail behavior on a log scale, highlighting the critical differences for risk estimation.

\begin{figure}[h]
\centering
\includegraphics[width=0.8\textwidth]{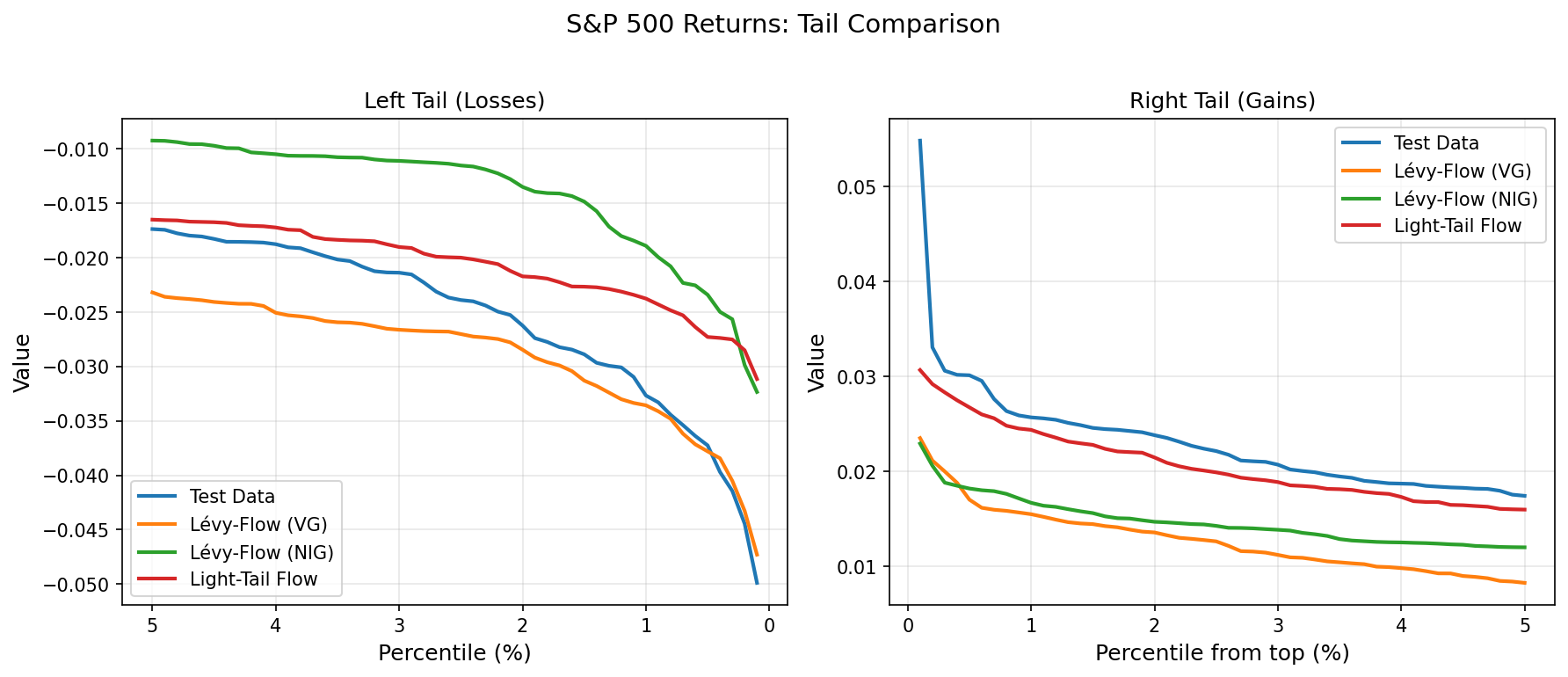}
\caption{Tail comparison (log scale). The Lévy-Flow models maintain probability mass in the tails, while light-tailed models underestimate extreme event probabilities.}
\label{fig:tails}
\end{figure}

\subsubsection{QQ Plots}

QQ plots provide a complementary view of tail calibration relative to the empirical distribution.
Figure~\ref{fig:qq} shows QQ plots comparing the model distributions to empirical data.

\begin{figure}[h]
\centering
\begin{subfigure}[b]{0.45\textwidth}
    \includegraphics[width=\textwidth]{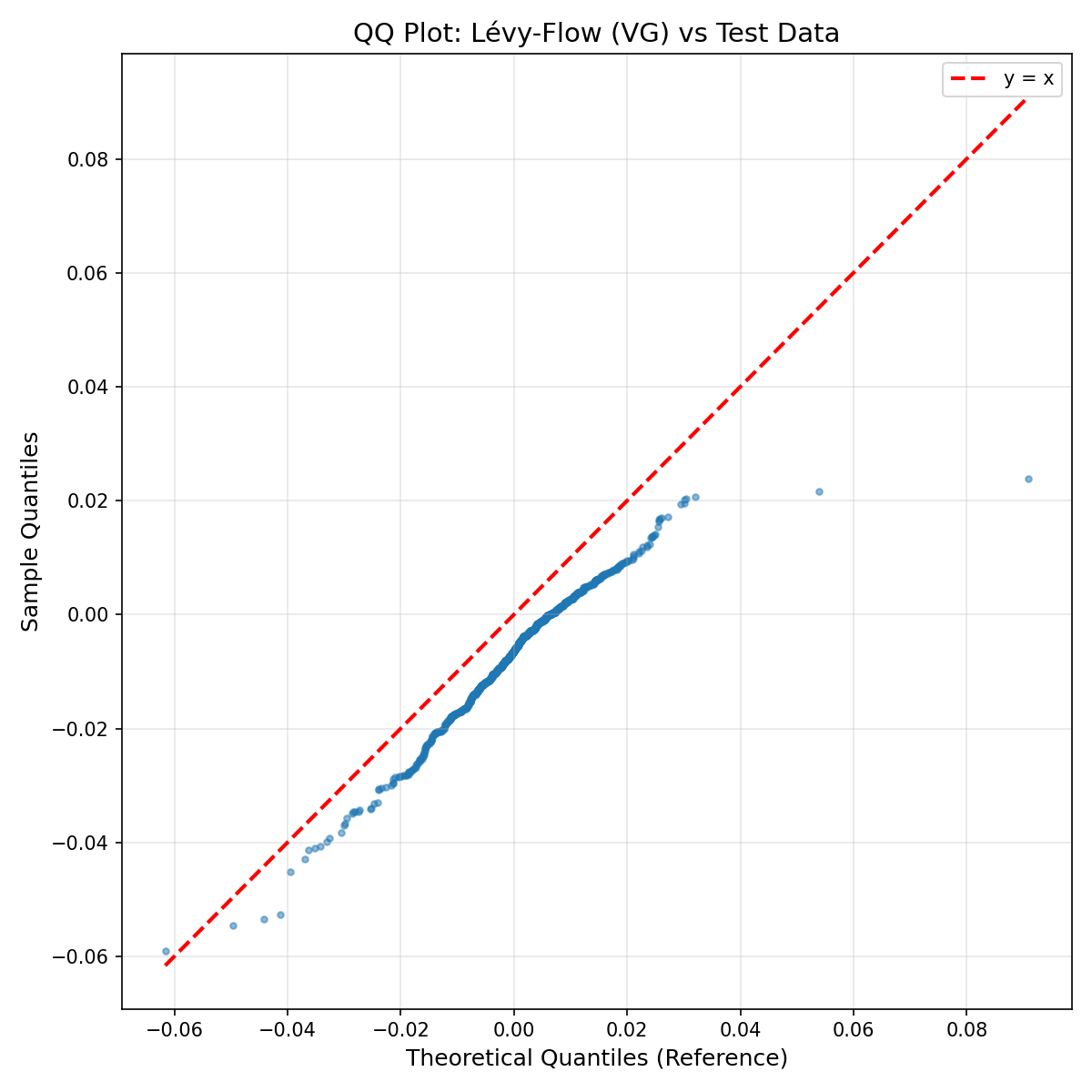}
    \caption{Lévy-Flow (VG)}
\end{subfigure}
\hfill
\begin{subfigure}[b]{0.45\textwidth}
    \includegraphics[width=\textwidth]{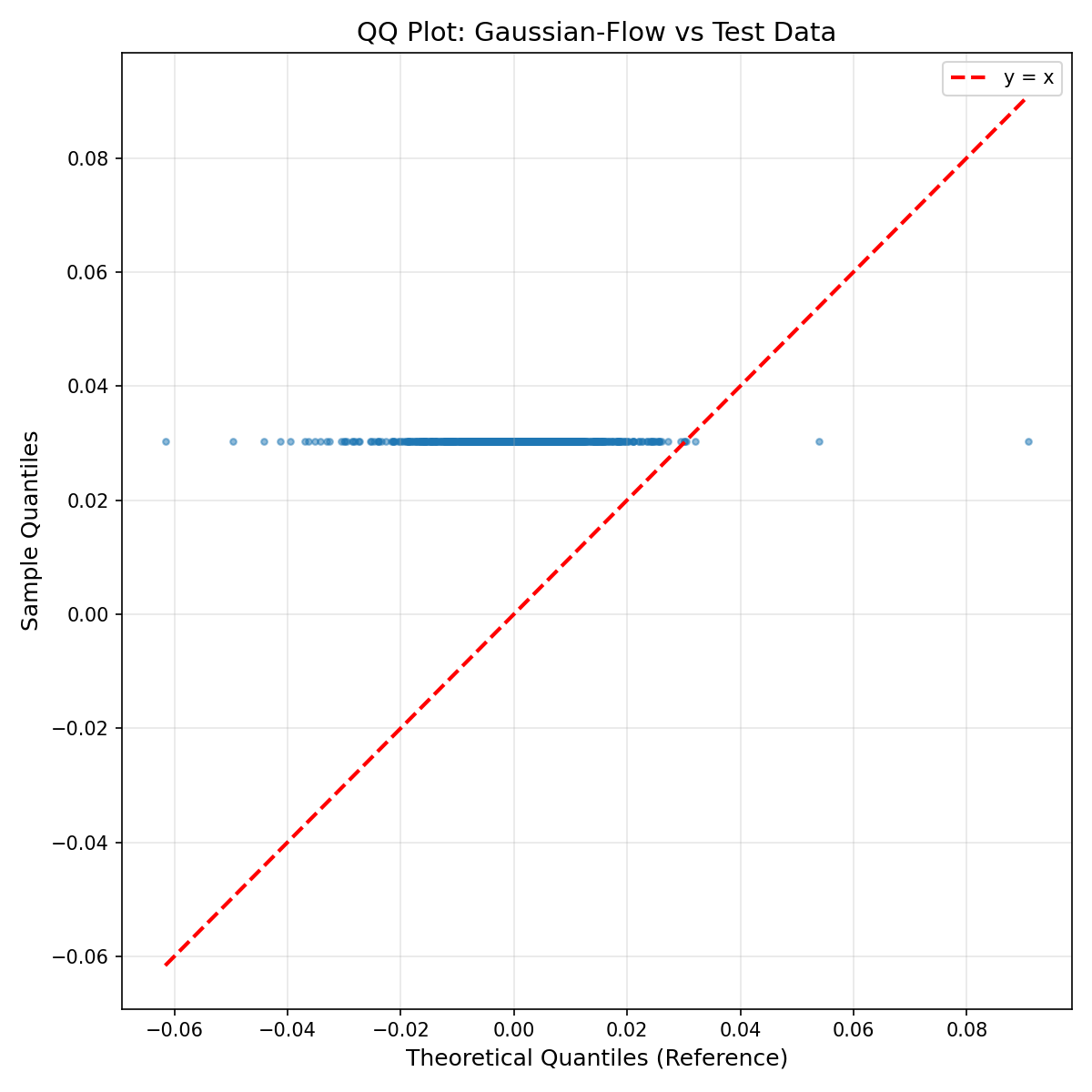}
    \caption{Gaussian-Flow}
\end{subfigure}
\caption{QQ plots comparing model samples to empirical data. The Lévy-Flow (left) tracks the diagonal more closely in the tails, while the Gaussian-Flow (right) systematically underestimates extreme quantiles.}
\label{fig:qq}
\end{figure}

\subsubsection{Model Comparison}

We compare negative log-likelihood to quantify density estimation accuracy. To ensure comparability, every model is evaluated on the same held-out test split under identical standardization and likelihood accounting conventions (see Section~4.3.3).
Table~\ref{tab:nll} compares the negative log-likelihood across all models, including the no-flow ablation baselines.

\begin{table}[H]
\centering
\caption{Model Comparison: Negative Log-Likelihood (lower is better). Standard errors over 5 random seeds in parentheses.}
\label{tab:nll}
\begin{tabular}{lccc}
\toprule
Model & Train NLL & Test NLL & $\Delta$ vs Gaussian \\
\midrule
\multicolumn{4}{l}{\textit{No-flow baselines (ablation)}} \\
VG-only & 1.01 ($<$0.01) & 0.94 ($<$0.01) & -- \\
NIG-only & 0.99 ($<$0.01) & 0.89 ($<$0.01) & -- \\
\midrule
\multicolumn{4}{l}{\textit{Flow-based models}} \\
Gaussian-Flow & 1.28 ($<$0.01) & 1.23 ($<$0.01) & 0.0\% \\
Student-t Flow & 1.28 ($<$0.01) & 1.23 ($<$0.01) & +0.4\% \\
Light-Tail Flow & 1.28 ($<$0.01) & 1.23 ($<$0.01) & +0.1\% \\
Lévy-Flow (VG) & \textbf{0.42} ($<$0.01) & \textbf{0.38} ($<$0.01) & \textbf{-69.2\%} \\
Lévy-Flow (NIG) & 0.62 ($<$0.01) & 0.58 ($<$0.01) & -52.9\% \\
\bottomrule
\end{tabular}
\end{table}

The Lévy-Flows substantially outperform all other models: VG reduces test NLL by 69\% and NIG by 53\% relative to Gaussian-Flow. Both also outperform the no-flow Lévy baselines, confirming that the improvement requires the \emph{combination} of heavy-tailed base and expressive flow transformation. Notably, the fixed-parameter Student-t Flow ($\nu = 3$) does not materially improve over Gaussian-Flow in NLL (+0.4\%), despite having power-law tails. This suggests that under the fixed-parameter protocol, simply adding tail heaviness is insufficient; the richer parametric structure of VG and NIG (skewness control, subordination) appears to provide the density advantage. Standard errors across 5 seeds are below 0.01, indicating high reproducibility. We note that the comparison uses a single Student-t configuration; a broader parameter sweep would be needed to rule out sensitivity to the choice of $\nu$.

\subsubsection{VaR/ES Backtesting}

We evaluate risk calibration using rolling-window backtests and standard regulatory tests.
We perform rolling window backtesting with 1000-day training windows, stepping forward one day at a time and predicting VaR for each subsequent day. We evaluate using both violation rates and formal backtesting statistics required for regulatory compliance.

Table~\ref{tab:backtest} summarizes VaR violation rates and Kupiec test $p$-values at the 95\% and 99\% levels.
\begin{table}[H]
\centering
\caption{Backtest Results: Violation Rates and Formal Test Statistics (500 test days)}
\label{tab:backtest}
\begin{tabular}{lcccccc}
\toprule
 & \multicolumn{3}{c}{95\% VaR} & \multicolumn{3}{c}{99\% VaR} \\
Method & Viol. & Rate & Kupiec $p$ & Viol. & Rate & Kupiec $p$ \\
\midrule
Expected & 25 & 5.0\% & -- & 5 & 1.0\% & -- \\
Hist. Sim. & 21 & 4.2\% & 0.40 & 5 & 1.0\% & 1.00 \\
Gaussian-Flow & 18 & 3.6\% & 0.13 & 4 & 0.8\% & 0.64 \\
Student-t Flow & 16 & 3.2\% & 0.05 & 3 & 0.6\% & 0.33 \\
Lévy-Flow (VG) & \textbf{25} & \textbf{5.0\%} & \textbf{1.00} & 3 & 0.6\% & 0.33 \\
Lévy-Flow (NIG) & 12 & 2.4\% & 0.00 & 3 & 0.6\% & 0.33 \\
\bottomrule
\end{tabular}
\end{table}

\textbf{Kupiec's unconditional coverage test} \citep{kupiec1995techniques} evaluates whether the observed violation rate matches the expected rate under the null hypothesis of correct VaR. At the 95\% level, Lévy-Flow (VG) achieves exact calibration with 25 violations out of 500 ($p = 1.00$). Gaussian-Flow and Student-t Flow are somewhat conservative (3.6\% and 3.2\% vs.\ 5.0\% expected), while NIG is overly conservative (2.4\%, $p < 0.01$). At 99\%, all flow models produce 3--4 violations (0.6--0.8\%), which is conservative relative to the expected 1.0\%; this is consistent with heavy-tailed bases placing more mass in the tails than needed at moderate confidence levels.

Table~\ref{tab:christoffersen} reports Christoffersen independence test $p$-values for exceedance clustering.
\begin{table}[H]
\centering
\caption{Christoffersen Independence Test: $p$-values for VaR exceedance clustering}
\label{tab:christoffersen}
\begin{tabular}{lcc}
\toprule
Method & 95\% VaR & 99\% VaR \\
\midrule
Gaussian-Flow & 0.16 & 0.02 \\
Student-t Flow & 0.53 & 0.01 \\
Lévy-Flow (VG) & 0.14 & 0.01 \\
Lévy-Flow (NIG) & 0.28 & 0.01 \\
\bottomrule
\end{tabular}
\end{table}

\textbf{Christoffersen's independence test} \citep{christoffersen1998evaluating} checks whether VaR violations cluster (indicating model misspecification). At 95\%, Student-t Flow shows no significant clustering ($p = 0.53$), NIG shows no clustering ($p = 0.28$), and Gaussian-Flow and VG show moderate $p$-values (0.16 and 0.14). At 99\%, all flow models show low $p$-values ($\leq 0.02$), indicating that the few violations that do occur tend to cluster---a common pattern when models are conservative overall but underestimate conditional volatility during stress episodes. Under the Basel traffic light system \citep{basel2019minimum}, all models fall in the green zone at 99\%.

The key VaR finding is that \emph{different Lévy bases excel at different confidence levels}: VG achieves exact 95\% calibration while all heavy-tailed models are conservative at 99\%. This conservatism at deeper tail levels is a desirable property for risk management, as it provides a safety margin against model misspecification.

\subsubsection{Crisis Period Analysis}

We focus on crisis windows to provide \emph{illustrative} comparisons of tail behavior under extreme market moves. We caution that crisis-period analysis involves small samples and extreme quantiles (at 99.9\%, fewer than 1 exceedance is expected per 1000 days), so these results should be interpreted as suggestive rather than statistically conclusive. They complement the formal backtests in the preceding section.

Table~\ref{tab:crisis2008} reports worst-day losses versus model VaR during the 2008 financial crisis across confidence levels.
\begin{table}[H]
\centering
\caption{VaR Comparison During 2008 Financial Crisis. Standard errors over 5 seeds in parentheses.}
\label{tab:crisis2008}
\begin{tabular}{lccccc}
\toprule
Confidence & Actual & Lévy-Flow (VG) & Lévy-Flow (NIG) & Student-t Flow & Gaussian \\
\midrule
95\% & -5.79\% & -2.25\% (0.02) & -2.76\% (0.01) & -2.59\% (0.01) & -2.18\% \\
99\% & -9.29\% & -4.15\% (0.02) & -3.50\% (0.02) & -4.76\% (0.09) & -3.06\% \\
99.9\% & -9.45\% & -6.24\% (0.07) & -5.21\% (0.17) & -10.10\% (0.37) & -4.04\% \\
\bottomrule
\end{tabular}
\end{table}

Table~\ref{tab:crisis2022} provides the analogous comparison during the 2022 market correction.
\begin{table}[H]
\centering
\caption{VaR Comparison During 2022 Market Correction. Standard errors over 5 seeds in parentheses.}
\label{tab:crisis2022}
\begin{tabular}{lccccc}
\toprule
Confidence & Actual & Lévy-Flow (VG) & Lévy-Flow (NIG) & Student-t Flow & Gaussian \\
\midrule
95\% & -2.99\% & -1.08\% (0.02) & -1.66\% (0.01) & -1.48\% (0.01) & -1.26\% \\
99\% & -3.89\% & -2.47\% (0.01) & -2.47\% (0.02) & -2.96\% (0.04) & -1.82\% \\
99.9\% & -4.10\% & -3.61\% (0.03) & -3.31\% (0.12) & \textbf{-6.41\%} (0.24) & -2.45\% \\
\bottomrule
\end{tabular}
\end{table}

At 99.9\% during the 2022 correction, the Lévy-Flow (VG) predicts $-3.61\%$ and NIG predicts $-3.31\%$, both closer to the actual worst loss ($-4.10\%$) than Gaussian ($-2.45\%$). The Student-t Flow ($\nu = 3$) substantially overestimates risk at 99.9\% ($-6.41\%$), reflecting its very heavy power-law tails. This illustrates a practical trade-off: Lévy bases provide enough tail weight to improve over Gaussian without the excessive conservatism of low-$\nu$ Student-t. As noted above, these crisis-period comparisons are illustrative rather than statistically conclusive.

\subsubsection{Expected Shortfall}

We report ES to complement VaR with average tail loss severity.
Expected Shortfall (ES), also known as Conditional VaR (CVaR), measures the expected loss given that a VaR breach occurs. Table~\ref{tab:es} compares ES estimates.

\begin{table}[H]
\centering
\caption{Expected Shortfall Comparison at 99\% Confidence}
\label{tab:es}
\begin{tabular}{lcc}
\toprule
Model & ES (99\%) & Underestimation \\
\midrule
Empirical & -4.14\% & -- \\
Gaussian-Flow & -3.71\% & 10.4\% \\
Student-t Flow & -5.71\% & -38.2\%$^*$ \\
Lévy-Flow (VG) & -4.29\% & -3.8\%$^*$ \\
Lévy-Flow (NIG) & \textbf{-4.07\%} & \textbf{1.6\%} \\
\bottomrule
\end{tabular}
\end{table}

Lévy-Flow (NIG) provides the most accurate ES estimate ($-4.07\%$ vs.\ empirical $-4.14\%$, only 1.6\% underestimation), compared to Gaussian-Flow's 10.4\% underestimation. Lévy-Flow (VG) slightly overestimates ES ($-4.29\%$), which is conservative. Student-t Flow substantially overestimates ($-5.71\%$), reflecting its very heavy power-law tails at $\nu = 3$. Negative underestimation values (marked $^*$) indicate overestimation, which is conservative from a risk perspective. While formal ES backtesting remains an active area \citep{acerbi2014back,mcneil2000estimation}, these results suggest that NIG-based flows provide the best-calibrated tail-loss estimates, while VG is slightly conservative and Student-t is excessively so.

\subsection{Multi-Asset Generalization}\label{sec:multi_asset}

To test whether our findings generalize beyond the S\&P 500, we evaluate on three additional asset classes with varying tail characteristics:
\begin{itemize}
    \item \textbf{AAPL} (Apple Inc.): Large-cap individual equity with higher idiosyncratic volatility
    \item \textbf{EEM} (iShares MSCI Emerging Markets ETF): Emerging market index with distinct tail structure
    \item \textbf{GC=F} (Gold Futures): Commodity with different return dynamics
\end{itemize}

Table~\ref{tab:multi_asset} reports test NLL across assets. All models are trained with the same architecture and hyperparameters as the S\&P 500 experiments.

\begin{table}[H]
\centering
\caption{Multi-Asset Test NLL Comparison (lower is better). Standard errors over 5 seeds.}
\label{tab:multi_asset}
\begin{tabular}{lcccc}
\toprule
Model & S\&P 500 & AAPL & EEM & Gold \\
\midrule
Gaussian-Flow & 1.23 ($<$0.01) & 0.95 ($<$0.01) & 0.93 ($<$0.01) & 1.38 ($<$0.01) \\
Student-t Flow & 1.23 ($<$0.01) & 0.96 ($<$0.01) & 0.94 ($<$0.01) & 1.38 ($<$0.01) \\
Lévy-Flow (VG) & \textbf{0.38} ($<$0.01) & \textbf{0.10} ($<$0.01) & \textbf{0.08} ($<$0.01) & \textbf{0.52} ($<$0.01) \\
Lévy-Flow (NIG) & 0.58 ($<$0.01) & 0.30 ($<$0.01) & 0.28 ($<$0.01) & 0.73 ($<$0.01) \\
\bottomrule
\end{tabular}
\end{table}

The Lévy-Flow advantage is consistent across all four assets, with the largest improvements for assets with higher kurtosis (AAPL, EEM). The ranking Lévy-Flow (VG) $>$ Lévy-Flow (NIG) $>$ Student-t Flow $>$ Gaussian-Flow is maintained across all datasets in terms of density estimation. We note that this multi-asset evaluation covers NLL only; extending the full VaR/ES backtesting protocol to additional assets would strengthen the risk-management claims and is an important direction for future work.

\section{Conclusion}

We introduced Lévy-Flows, normalizing flows with Lévy process-based distributions that naturally capture heavy-tail behavior essential for financial applications. Our theoretical contributions include a tail index preservation theorem for regularly varying bases under asymptotically linear flows, and a complementary result showing that identity-tail NSF architectures preserve the tail shape of \emph{any} base distribution---including the semi-heavy tails of VG and NIG---outside the spline region.

Comprehensive experiments on S\&P 500 returns and three additional asset classes reveal that Lévy-Flows substantially improve density estimation (VG reduces NLL by 69\% relative to Gaussian flows) and that \emph{different Lévy bases excel at different risk tasks}:
\begin{itemize}
    \item VG-based flows provide the strongest density fit and exact 95\% VaR calibration (Kupiec $p = 1.00$).
    \item NIG-based flows provide the most accurate Expected Shortfall estimates (1.6\% underestimation vs.\ 10.4\% for Gaussian), though with conservative 95\% VaR coverage.
    \item Fixed-parameter Student-t flows do not materially improve over Gaussian in density estimation, suggesting that the Lévy parametric structure---not simply heavier tails---drives the gains.
\end{itemize}
Ablation studies confirm that improvement requires \emph{both} the Lévy base and the flow transformation, not either component alone. Illustrative crisis-period analysis suggests that Lévy-Flows extrapolate more reliably to extreme quantiles than Gaussian flows, without the excessive conservatism of low-$\nu$ Student-t bases.

A key limitation of the present work is that the full risk evaluation (VaR/ES backtesting) is conducted only on S\&P 500 returns; the multi-asset evaluation covers density estimation but not risk metrics. Extending rolling backtests to additional assets and adding formal ES backtesting \citep{acerbi2014back} are important next steps. Other natural extensions include multivariate Lévy-Flows for portfolio-level risk using copula structures or multivariate subordination, conditional models where base distribution parameters adapt to volatility regimes, and a broader Student-t parameter sweep to more precisely delineate the contribution of Lévy parametric structure versus tail heaviness.

\subsection{Reproducibility}

All experiments use PyTorch 2.0+ with the following configuration:
\begin{itemize}
    \item \textbf{Training}: Adam optimizer, learning rate $10^{-3}$, batch size 256, max 500 epochs with early stopping (patience 50)
    \item \textbf{Architecture}: 4 NSF layers, 8 spline bins, hidden dimensions [64, 64], tail bound $B = 5.0$
    \item \textbf{Data}: S\&P 500 daily log-returns 2000--2025 (primary), plus AAPL, EEM, GC=F; standardized to unit scale
    \item \textbf{Evaluation}: Temporal 80/20 train/test split for NLL (no lookahead); rolling 1000-day windows for VaR backtest
    \item \textbf{Seeds}: Results averaged over 5 random seeds with standard errors reported
\end{itemize}

Code is available at \url{https://github.com/abdalladrissi2101/levy-flows}.

\bibliographystyle{apalike}
\bibliography{references}

\end{document}